\definecolor{mhcolor}{rgb}{0.0,0.5,0.0}
\newcommand{\ve}[1]{\vec{#1}}
\newcommand{\Manip}[0]{m}
\newcommand{\Obj}[0]{o}
\newcommand{\Jac}[1][]{\ve{J}_{#1}}
\newcommand{\NJac}[1][]{\ve{J}_{N,#1}}
\newcommand{\TJac}[1][]{\ve{J}_{T,#1}}
\newcommand{\Fvar}[1][]{\ve{\lambda}_{#1}}
\newcommand{\FNet}[1][]{\ve{F}_{#1}}
\newcommand{\FB}[1][]{\ve{B}_{#1}}
\newcommand{\FBScale}[1][]{c_{#1}}
\newcommand{\FMMat}[1][]{\ve{A}_{#1}}
\newcommand{\FMMap}[1][]{H_{#1}}
\newcommand{\Fn}[1][]{\ve \lambda_{N_{#1}}}
\newcommand{\Ft}[1][]{\ve \lambda_{T_{#1}}}
\newcommand{\Di}[1][]{\ve \delta_{#1}}
\newcommand{\SVar}[1][]{\ve \gamma_{#1}}
\newcommand{\MuMat}[1][]{\ve \mu_{#1}}
\newcommand{\GVel}[1][]{\ve{V}_{#1}}
\newcommand{\GVelJac}[1][]{\tilde{\ve{R}}_{#1}}
\newcommand{\EVec}[0]{\ve e}
\newcommand{\EMat}[0]{\ve E}
\newcommand{\FSet}{\mathcal{F}}
\newcommand{\NumC}[0]{k}
\newcommand{\DimManip}[0]{n}
\newcommand{\Centroid}[1][]{\ve{p}_{#1}}
\newcommand{\Dist}[1][]{\ve{\phi}_{#1}}
\newcommand{\TStart}{-}
\newcommand{\TEnd}{+}
\newcommand{\TSChange}{\ve{\Delta}}
\newcommand{\DFn}[1][]{\ve \Lambda_{N_{#1}}}
\newcommand{\DFt}[1][]{\ve \Lambda_{T_{#1}}}
\newcommand{\DDist}[1][]{\TSChange \Dist[#1]}
\newcommand{\DRBPos}[1][]{\TSChange \RBPos[#1]}
\newcommand{\Seq}[3][\Natural]{\left( {#2}_{#3} \right)_{#3\in#1}}
\newcommand{\parenEnc}[1]{\left( #1 \right)}
\newcommand{\Enorm}[1]{\left\lVert#1\right\rVert_2}
\newcommand{\Real}[0]{\mathbb{R}}
\newcommand{\ZVec}[0]{\ve{0}}
\newcommand{\ZMat}[0]{\ZVec}
\newcommand{\Eye}[0]{\ve I}
\newcommand{\PVDiff}[2]{\frac{\ve{\partial}#1 }{\ve{\partial}#2}}
\newcommand{\ArbVec}[0]{\ve{x}}
\newcommand{\ArbRCon}[0]{c}
\newcommand{\ArbRConB}[0]{r}
\newcommand{\ArbEVal}[0]{\lambda}
\newcommand{\ArbIter}[0]{i}
\newcommand{\ArbIterB}[0]{j}
\newcommand{\ArbSize}[0]{n}
\newcommand{\LCPMat}[0]{\ve M}
\newcommand{\LCPVec}[0]{\ve w}
\newcommand{\LCPVar}[0]{\ve z}
\newcommand{\LCP}[2]{\mathrm{LCP}(#1, #2)}
\newcommand{\LCPSol}[2]{\mathrm{SOL}(#1, #2)}
\newcommand{\LCPSet}[2]{\LCPSol{#1}{#2}}
\newcommand{\LCPWP}[1][]{\LCPMat \LCPVar + \LCPVec \geq \ZVec}
\newcommand{\LCPVarP}[1][]{\LCPVar \geq \ZVec}
\newcommand{\LCPComp}[1][]{\LCPVar^T (\LCPMat \LCPVar + \LCPVec) = \ZVec}
\newcommand{\VCompl}[2]{\ZVec \leq #1 \perp  #2 \geq \ZVec}
\newcommand{\LCPCons}[1][]{\VCompl{\LCPVar}{\LCPMat \LCPVar + \LCPVec}}
\newcommand{\RBVar}[0]{q}
\newcommand{\RBVel}[1][]{\ve{\dot{\RBVar}}_{#1}}
\newcommand{\RBPos}[1][]{\ve{\RBVar}_{#1}}
\newcommand{\RBDVel}[1][]{\ve{u}_{#1}}
\newcommand{\VNLC}[1][]{\VCompl{\NJac[\Obj] \RBVel[\Obj] + \NJac[\Manip] \RBVel[\Manip]}{\Fn}}
\newcommand{\VTLC}[1][]{\VCompl{\TJac[\Obj] \RBVel[\Obj] + \TJac[\Manip] \RBVel[\Manip] + \EMat \SVar}{\Ft}}
\newcommand{\VSLC}[1][]{\VCompl{\MuMat[#1]\Fn - \EMat^T\Ft}{\SVar}}
\newcommand{\PNLC}[1][]{\VCompl{\Dist[]^\TEnd}{\DFn}}
\newcommand{\PTLC}[1][]{\VCompl{\TJac[\Obj] \DRBPos[\Obj] + \TJac[\Manip] \DRBPos[\Manip] + \EMat \SVar}{\DFt}}
\newcommand{\PSLC}[1][]{\VCompl{\MuMat[#1]\DFn - \EMat^T\DFt}{\SVar}}
\newcommandtwoopt{\VLCPInfMat}[2][][]{\begin{bmatrix}
\NJac[#1] \FMMat \NJac[#1]^T & \NJac[#1] \FMMat \TJac[#1]^T & \ZVec \\
\TJac[#1] \FMMat \NJac[#1]^T & \TJac[#1] \FMMat \TJac[#1]^T & \EMat \\
\MuMat & - \EMat & \ZVec
\end{bmatrix}}
\DeclareDocumentCommand\VLCPFinMat { O{\Manip} O{\Obj} O{} } {\VLCPInfMat[#2][#3] + \begin{bmatrix}
\NJac[#1] \FBScale \FB[#3] \NJac[#1]^T & \NJac[#1] \FBScale\FB[#3] \TJac[#1]^T & \ZVec \\
\TJac[#1] \FBScale \FB[#3] \NJac[#1]^T & \TJac[#1] \FBScale\FB[#3] \TJac[#1]^T & \ZVec \\
\ZVec & \ZVec & \ZVec
\end{bmatrix}}
\newcommandtwoopt{\VLCPInfMat}[2][][]{\begin{bmatrix}
\NJac[#1] \FMMat \NJac[#1]^T & \NJac[#1] \FMMat \TJac[#1]^T & \ZVec \\
\TJac[#1] \FMMat \NJac[#1]^T & \TJac[#1] \FMMat \TJac[#1]^T & \EMat \\
\MuMat & - \EMat & \ZVec
\end{bmatrix}}
\DeclareDocumentCommand\VLCPFinMat { O{\Manip} O{\Obj} O{} } {\VLCPInfMat[#2][#3] + \begin{bmatrix}
\Jac[#1] \FBScale \FB[#3] \Jac[#1]^T & \ZVec \\
\ZVec & \ZVec
\end{bmatrix}}
\newcommandtwoopt{\VLCPVec}[2][\Manip][]{\begin{bmatrix}
\NJac[#1]\RBDVel \\
\TJac[#1]\RBDVel \\
\ZVec
\end{bmatrix}}
\newcommandtwoopt{\PLCPInfMat}[2][][]{\begin{bmatrix}
\NJac[#1]^\TStart \FMMat {\NJac[#1]^{\TStart T}} & \NJac[#1]^\TStart \FMMat {\TJac[#1]^{\TStart T}} & \ZVec \\
\TJac[#1]^\TStart \FMMat {\NJac[#1]^{\TStart T}}  & \TJac[#1]^\TStart \FMMat {\TJac[#1]^{\TStart T}}  & \EMat \\
\MuMat & - \EMat & \ZVec
\end{bmatrix}}
\DeclareDocumentCommand\PLCPFinMat { O{\Manip} O{\Obj} O{} } {\PLCPInfMat[#2][#3] + \begin{bmatrix}
\NJac[#1]^\TStart \FBScale \FB[#3] {\NJac[#1]^{\TStart T}}  & \NJac[#1]^\TStart \FBScale \FB[#3]  {\TJac[#1]^{\TStart T}} & \ZVec \\
\TJac[#1]^\TStart \FBScale \FB[#3]  {\NJac[#1]^{\TStart T}}  & \TJac[#1]^\TStart \FBScale \FB[#3]  {\TJac[#1]^{\TStart T}}  & \ZVec \\
\ZVec & \ZVec & \ZVec
\end{bmatrix}}
\newcommandtwoopt{\PLCPInfMat}[2][][]{\begin{bmatrix}
\NJac[#1]^\TStart \FMMat {\NJac[#1]^{\TStart T}} & \NJac[#1]^\TStart \FMMat {\TJac[#1]^{\TStart T}} & \ZVec \\
\TJac[#1]^\TStart \FMMat {\NJac[#1]^{\TStart T}}  & \TJac[#1]^\TStart \FMMat {\TJac[#1]^{\TStart T}}  & \EMat \\
\MuMat & - \EMat & \ZVec
\end{bmatrix}}
\DeclareDocumentCommand\PLCPFinMat { O{\Manip} O{\Obj} O{} } {\PLCPInfMat[#2][#3] + \begin{bmatrix}
\Jac[#1]^\TStart \FBScale \FB[#3] {\Jac[#1]^{\TStart T}}  & \ZVec \\
\ZVec & \ZVec 
\end{bmatrix}}
\newcommandtwoopt{\PLCPVec}[2][\Manip][]{\begin{bmatrix}
\NJac[#1]^\TStart h\RBDVel^\TStart + \Dist^\TStart\\
\TJac[#1]^\TStart h\RBDVel^\TStart \\
\ZVec
\end{bmatrix}}
\begin{document}
\mainmatter

\title{A Quasi-static Model and Simulation Approach for Pushing, Grasping, and Jamming}
\titlerunning{Quasi-static Pushing, Grasping, and Jamming}
\toctitle{Quasi-static Pushing, Grasping, and Jamming}
\author{Mathew Halm \and Michael Posa}
\institute{GRASP Laboratory \\ University of Pennsylvania \\ Philadelphia, PA, 19104 USA \\ \email{\{mhalm, posa\}@seas.upenn.edu}}

\maketitle
\begin{abstract}

Quasi-static models of robotic motion with frictional contact provide a computationally efficient framework for analysis and have been widely used for planning and control of non-prehensile manipulation.
In this work, we present a novel quasi-static model of planar manipulation that directly maps commanded manipulator velocities to object motion.
While quasi-static models have traditionally been unable to capture grasping and jamming behaviors, our approach solves this issue by explicitly modeling the limiting behavior of a velocity-controlled manipulator.
We retain the precise modeling of surface contact pressure distributions and efficient computation of contact-rich behaviors of previous methods and additionally prove existence of solutions for any desired manipulator motion.
We derive continuous and time-stepping formulations, both posed as tractable Linear Complementarity Problems (LCPs).

\end{abstract}
\begin{keywords}quasi-static motion, manipulation and grasping, rigid body motion, dynamics, linear complementarity problems, simulation
\end{keywords}
\section{Introduction}
As frictional contact is the fundamental driving process by which many robots are able to interact with their surroundings, it is unsurprising that its behavior is central to a large body of robotic locomotion and manipulation research (e.g. \cite{ref:Hogan2018,ref:Lynch1992,ref:Posa2014,ref:Posa2016,ref:Song2005,ref:Trinkle1989}).
However, dynamical models of these systems are inherently complex and challenging to simulate and analyze.
Impacts between rigid bodies induce instantaneous jumps in velocity states and a combinatorial explosion of hybrid modes that in conjunction render application of common tools from control theory and trajectory optimization difficult. While there has been notable progress in planning through unknown contact sequences with full dynamics \cite{ref:Posa2014,ref:Mordatch2015,ref:Manchester2017}, model complexity has thus far still inhibited real-time usage.

Many applications in robotics involving frictional contact exhibit structure that permits partial or full circumvention of these difficulties. Particularly, we examine planar tabletop manipulation, where a manipulator effects motion of an object that rests upon a flat, frictional surface. Several results in simulation, control, and planning for such systems have been enabled by quasi-static assumptions---that if manipulator accelerations and velocities are low, a force-balance equation can approximate Netwon's second law. This assumption enables reduced-order modeling of the object's movement in response to contact with a manipulator driven at a particular velocity; such models also often circumvent the complexity associated with state discontinuities in dynamical approaches. Furthermore, quasi-static models often eliminate numerical sensitivity induced by the stiff dynamics of manipulators with high feedback gain controllers.

The tractability of these models has enabled impressive results in formal control analysis, task planning, and learning (e.g. \cite{ref:Dogar2012, ref:Kloss2017, ref:Lynch1996}).
However, the range of motion that these methods are currently able to model is limited. They are often restricted to pushing and non-prehensile motions and are completely unable to usefully express grasping or jamming; in these cases, their associated mathematical programs often yield no solutions or ambiguous behavior. 
Grasping and jamming objects is crucial for a wide range of robot tasks, and much work has been devoted to planning and controlling action before and after grasping events (e.g.\cite{ref:Rodriguez2012,ref:Rimon1996,ref:Hass-Heger2018,ref:Zhou2017b,ref:Paolini2014}); However, much of this work can only describe static grasp configurations, and is unable to depict grasp-like behavior with sliding contacts. The process of acquiring such a grasp itself often involves jamming (e.g. when reorienting an object by pushing it up against a wall), which neither the prehensile nor static grasping models can capture. We therefore find great value in the formulation of a unified quasi-static model that can smoothly capture a complete task involving the acquisition and use of a grasp. The ambiguity in traditional approaches arises from the inconsistent assumptions of rigid bodies and perfect control of manipulator velocity. Our key insight is that by appropriately representing the manipulator's internal controller, physically-grounded motion of the object-manipulator system is guaranteed to exist. We contribute both instantaneous velocity and time-stepping position models that formulate this behavior as linear complementarity problems, and prove existence of solutions for each.

\section{Related Work}

There is a significant body of research that examines manipulation from a quasi-static perspective  \cite{ref:Mason1986,ref:Trinkle1989,ref:Lynch1992,ref:Song2005,ref:Pang2018}. For systems in which the object experiences frictional support, relevant research typically examines the pressure distribution supporting the object. Some earlier works provide guaranteed properties of the object's motion without full knowledge of this distribution.
Mason \cite{ref:Mason1986} derived the voting theorem to construct a mapping from the center of pressure to the direction of the object's angular velocity. Lynch and Mason \cite{ref:Lynch1996} later performed stability and controllability analysis for a manipulator pushing an object with multiple fingers.
Other works alternatively contribute models that directly map manipulator joint velocities to object motion. Trinkle \cite{ref:Trinkle1989} characterized vertical planar manipulation using a nonlinear mathematical program that explicitly solved for the contact forces between the object and the manipulator. A similar, more general model for arbitrary 3D rigid multibody systems was proposed in Trinkle et al. \cite{ref:Trinkle2005}. Neither formulation can model detailed pressure distributions between surfaces, as they model friction as acting at a finite set of points.

Efficient and expressive modeling of the complex behavior of these pressure distributions was enabled by Goyal et al. \cite{ref:Goyal1991}, who define the \emph{limit surface}, the bounded convex set of friction loads that the surfaces in contact might exert on each other. Zhou et al. \cite{ref:Zhou2018} approximate this set in high detail as a semialgebraic set fitted to experimental data, and from it derive a model for the motion of the manipulator-object system.
This method, however, assumes that the manipulator follows a commanded velocity exactly, and therefore does not return a solution for infeasible commands. As such commands often result in grasping in a full dynamical model, valuable behaviors are not captured. Some planning methods (e.g \cite{ref:Erdmann1986}) introduce manipulator compliance through the generalized dampers of Whitney \cite{ref:Whitney1977}, though they have not been incorporated into  models capable of initiating and releasing multiple contacts. 

Pang and Tedrake \cite{ref:Pang2018} also devise a resolution for non-existence in velocity-controlled 3D rigid quasi-static systems. They model deviations from desired velocity as a result of local elastic deformation at point contacts, and preserve realism by minimizing them in a mixed-integer quadratic program (MIQP) formulation. While their work applies to a more general class of systems, our method has three key advantages for planar manipulation: we draw model behavior from a problem class that is far more tractable than MIQPs; our proof of existence makes possible formal guarantees for controller performance as well as simulation reliability; and our inclusion of a limit-surface model allows for more realistic modeling without introducing significant complexity.
\section{Background}

\begin{figure}[h]
\includegraphics[width=7cm]{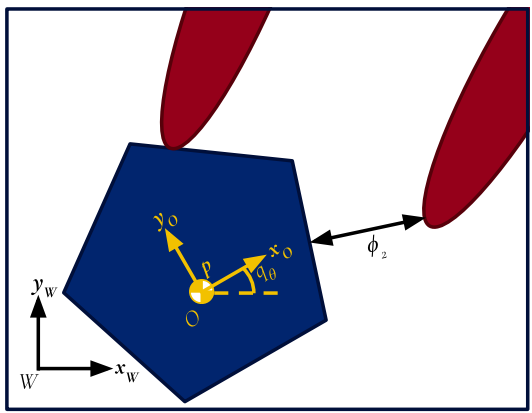}
\centering
\caption{An example of the type of system described in this section. The blue pentagon represents the object, while the red shapes represent the manipulator.}\label{fig:Overhead}
\end{figure}
We now describe the behavior of a frictionally-supported object with center of mass $\Centroid$ in contact with a manipulator, depicted in Figure \ref{fig:Overhead}. For a more detailed treatment of rigid-body contact dynamics, we refer the reader to \cite{ref:Anitescu2002}. Coordinates of the object and manipulator configurations are denoted $\RBPos[\Obj] = \begin{bmatrix}
\Centroid^W; & q_\theta
\end{bmatrix} \in \Real^3$ and $\RBPos[\Manip] \in \Real^\DimManip$, respectively. The manipulator is assumed to be controlled to track commanded velocity $\RBDVel$. Frictional contact with the manipulator causes the object to move with body-axis velocity $\GVel = \begin{bmatrix}
v_{x,b} & v_{y,b} & \dot{q}_\theta
\end{bmatrix}^T$, which can be converted to the world frame velocity $\RBVel[\Obj]$ with the full-rank transformation $\RBVel[\Obj] = \GVelJac(q_\theta)\GVel$. Contact is modeled as normal and frictional forces $\Fn \in \Real^\NumC$ and $\Ft \in \Real^{2\NumC}$ at $\NumC$ pairs of points on the boundaries of the object and manipulator. The coefficients of friction at these points are given by the diagonal matrix $\MuMat \in \Real^{\NumC \times \NumC}$. Each contact's tangential force is split into two non-negative opposing forces $\Ft[\ArbIter]^+$ and $\Ft[\ArbIter]^-$ (as in \cite{ref:Anitescu2002}), such that the net tangential force is equal to $\Ft[\ArbIter]^+ - \Ft[\ArbIter]^-$. The distances between each pair of points are represented by the vector $\Dist[](\RBPos[\Obj],\RBPos[\Manip]) \in \Real^\NumC$. When the points are in contact, their separating velocity can be calculated from the generalized velocities as $\dot{\Dist[]} = \NJac[\Obj] \RBVel[\Obj] + \NJac[\Manip] \RBVel[\Manip]$ using Jacobian matrices $\NJac[\Obj] = \PVDiff{\Dist[]}{\RBPos[\Obj]} $ and $\NJac[\Manip] = \PVDiff{\Dist[]}{\RBPos[\Manip]}$. Similarly, there exist Jacobian matrices $\TJac[\Obj]$ and $\TJac[\Manip]$ such that the velocity at which the bodies slide against each other is given by $\TJac[\Obj] \RBVel[\Obj] + \TJac[\Manip] \RBVel[\Manip]$. For notational convenience, we will often group the normal and tangential terms as $\Jac[\Obj] = [\NJac[\Obj] ; \TJac[\Obj]]$, $\Jac[\Manip] = [\NJac[\Manip] ; \TJac[\Manip]]$, and $\Fvar = \begin{bmatrix}
	\Fn; \Ft
\end{bmatrix}$. Using the relationship between the contact velocities and the generalized velocities, the contact forces can be used to determine the forces acting on the object and manipulator as $\FNet[\Obj] = \Jac[\Obj]^T \Fvar $ and $\FNet[\Manip] = \Jac[\Manip]^T \Fvar$, respectively.

We also make use of the constants $\EVec = \begin{bmatrix} 1& 1\end{bmatrix}^T$ and the block-diagonal matrix
\begin{equation}
	\EMat = \begin{bmatrix}
		\EVec & \ZVec & \cdots \\
		\ZVec & \EVec & \\
		\vdots & & \ddots
	\end{bmatrix} \in \Real^{2\NumC \times \NumC}\;.
\end{equation}

\subsection{Linear Complementarity Problems}
Throughout this work, we will make regular use of linear complementarity problems (LCPs). An LCP is a particular type of mathematical program for which solutions can be efficiently computed. LCPs have been widely used by the dynamics and robotics communities for describing the effects of contact (e.g. \cite{ref:Stewart1996,ref:Anitescu2002,ref:Zhou2018}). Here, we briefly introduce the problem formulation and some useful properties, and we refer the reader to \cite{ref:Cottle2009} for a more complete description.

\begin{definition}\label{def:LCPDef}
The \textbf{linear complementarity problem} $\LCP{\LCPMat}{\LCPVec}$ for the matrix $\LCPMat \in \Real^{\ArbSize \times \ArbSize}$ and vector $\LCPVec \in \Real^\ArbSize$ describes the mathematical program
\begin{align}
& {\text{find}}
& & \LCPVar \in \Real^\ArbSize\;,\\
& \text{subject to}
& & \LCPComp\;, \label{eq:lcpcompbeg}\\
& & & \LCPVarP\;, \\
& & & \LCPWP\;, \label{eq:lcpcompend}
\end{align}
%\end{equation}
for which the set of solutions is denoted $\LCPSet{\LCPMat}{\LCPVec}$. Constraints (\ref{eq:lcpcompbeg})--(\ref{eq:lcpcompend}), called complementarity constraints, are often abbreviated as $\LCPCons$. 
\end{definition}

Note that vector inequalities in the above definition, as well as elsewhere in this work, are taken element-wise. We will find that for LCPs related to frictional behavior, the matrix parameter $\LCPMat$ is often copositive (i.e. $\ArbVec^T \LCPMat \ArbVec \geq 0$ for all $\ArbVec \geq \ZVec$). This property is often theoretically useful, as Corollary 4.4.12 of \cite{ref:Cottle2009}, reproduced below, gives a sufficient condition for copositive LCP feasibility.
\begin{proposition}\label{prop:LCPCopExist}
Let $\LCPVec \in \Real^\ArbSize$, and let $\LCPMat \in \Real^{\ArbSize \times \ArbSize}$ be copositive. Suppose that for every $\LCPVar \in \LCPSol{\LCPMat}{\ZVec}$, we have $\LCPVar^T\LCPVec \geq 0$. It follows that $\LCPSet{\LCPMat}{\LCPVec} \neq \emptyset$, and an element of $\LCPSol{\LCPMat}{\LCPVec}$ can be discovered by Lemke's Algorithm in finite time.
\end{proposition}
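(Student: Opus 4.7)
My plan is to apply Lemke's algorithm to the augmented LCP obtained by introducing a scalar $z_0 \geq 0$ and a strictly positive covering vector $\ve{d} \in \Real^\ArbSize$, giving the enlarged system $\LCPMat \LCPVar + \ve{d} z_0 + \LCPVec \geq \ZVec$ with $\LCPVar \geq \ZVec$, $z_0 \geq 0$, and almost-complementarity. First I would exhibit the standard initial almost-complementary basic feasible solution: set $\LCPVar = \ZVec$ and choose $z_0 = \max_i (-\LCPVec_i)^+$ so that feasibility holds trivially. Then, invoking the standard convergence theory for Lemke's scheme with lexicographic anti-cycling, the pivoting procedure follows an almost-complementary edge path and after finitely many pivots either drives $z_0$ to zero (in which case the resulting $\LCPVar$ belongs to $\LCPSet{\LCPMat}{\LCPVec}$ and we are done) or terminates on a \emph{secondary ray}: an almost-complementary basic feasible solution $(\hat{\LCPVar}, \hat{z}_0, \hat{\ve{w}})$ with $\hat{z}_0 > 0$, together with a nonzero almost-complementary direction $\LCPVar^*, \ve{w}^* \geq \ZVec$ and $z_0^* \geq 0$ satisfying $\LCPMat \LCPVar^* + \ve{d} z_0^* = \ve{w}^*$.

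The main obstacle is ruling out the secondary-ray case; this is exactly where the hypotheses of the proposition enter. Expanding the identity $(\hat{\LCPVar} + \lambda \LCPVar^*)^T (\hat{\ve{w}} + \lambda \ve{w}^*) = 0$, which holds for all $\lambda \geq 0$ because almost-complementarity forces the index-wise products to vanish along the whole ray, and reading off the coefficients of $\lambda^2$ and $\lambda^1$ yields $\LCPVar^{*T}\ve{w}^* = 0$ and, after noting that the two cross terms in the linear coefficient are separately nonnegative, also $\hat{\LCPVar}^T \ve{w}^* = 0$ and $\LCPVar^{*T}\hat{\ve{w}} = 0$. Substituting $\ve{w}^* = \LCPMat \LCPVar^* + \ve{d} z_0^*$ into $\LCPVar^{*T}\ve{w}^* = 0$ and applying copositivity of $\LCPMat$ forces both $\LCPVar^{*T}\LCPMat \LCPVar^* = 0$ and $z_0^* = 0$ (using $\ve{d} > \ZVec$ and $\LCPVar^* \neq \ZVec$), so that $\LCPVar^* \in \LCPSol{\LCPMat}{\ZVec}$.

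The hypothesis then supplies $\LCPVar^{*T}\LCPVec \geq 0$, and I would close the argument by substituting $\hat{\ve{w}} = \LCPMat \hat{\LCPVar} + \ve{d} \hat{z}_0 + \LCPVec$ into $\LCPVar^{*T}\hat{\ve{w}} = 0$ to obtain $\LCPVar^{*T}\LCPMat \hat{\LCPVar} + \hat{z}_0 \ve{d}^T \LCPVar^* + \LCPVar^{*T}\LCPVec = 0$. The middle term is strictly positive, since $\hat{z}_0 > 0$ on a secondary ray, $\ve{d} > \ZVec$, and $\LCPVar^* \neq \ZVec$. For the first term, the standard copositivity trick of expanding $(\LCPVar^* + \alpha \hat{\LCPVar})^T \LCPMat (\LCPVar^* + \alpha \hat{\LCPVar}) \geq 0$ for all $\alpha \geq 0$ together with $\LCPVar^{*T}\LCPMat \LCPVar^* = 0$ yields $\LCPVar^{*T}\LCPMat \hat{\LCPVar} + \hat{\LCPVar}^T \LCPMat \LCPVar^* \geq 0$; combined with $\hat{\LCPVar}^T \LCPMat \LCPVar^* = \hat{\LCPVar}^T \ve{w}^* = 0$, this gives $\LCPVar^{*T}\LCPMat \hat{\LCPVar} \geq 0$. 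The three-term sum is therefore strictly positive unless $\LCPVar^{*T}\LCPVec < 0$, contradicting the hypothesis. Hence no secondary ray can terminate the Lemke path, and the algorithm must finite-terminate with $z_0 = 0$, producing an element of $\LCPSol{\LCPMat}{\LCPVec}$.
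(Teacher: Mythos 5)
Your proof is correct, and it coincides with the argument behind the paper's citation: the paper itself gives no proof of this proposition, importing it verbatim as Corollary 4.4.12 of the cited monograph of Cottle, Pang, and Stone, whose proof is exactly yours---run Lemke's method on the augmented problem with covering vector $\ve{d} > \ZVec$ and rule out secondary-ray termination by showing the ray direction $\LCPVar^*$ lies in $\LCPSol{\LCPMat}{\ZVec}$ while the identity $\LCPVar^{*T}\LCPMat\hat{\LCPVar} + \hat{z}_0\,\ve{d}^T\LCPVar^* + \LCPVar^{*T}\LCPVec = 0$ would force $\LCPVar^{*T}\LCPVec < 0$, contradicting the hypothesis. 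The only ingredients you invoke without proof---finite termination under lexicographic anti-cycling, and $\LCPVar^* \neq \ZVec$ on a secondary ray (without which you could not conclude $z_0^* = 0$; it holds because a ray direction with $\LCPVar^* = \ZVec$ would make the ray the primary one)---are standard Lemke facts established in that same reference, so the appeal is legitimate.
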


\subsection{Friction at Point Contacts Between Object and Manipulator}
Common to many models of friction is the \emph{maximum dissipation principle}, which states that if $\FSet$ is the set of all possible forces acting at a contact, then the force at any given moment is one such that the power dissipated at the contact is maximized. For contact at a single point between two rigid bodies with relative velocity $\ve v$, this condition is realized as
\begin{equation}
	\ve f \in \arg\min_{\ve f' \in \mathcal{F}} \ve f' \cdot \ve v\;.
\end{equation}
For point contacts, $\mathcal{F}$ is often modeled as a cone by Coulomb friction, which enforces the following:
\begin{itemize}
\item For each contact, either the normal velocity is zero and the normal force is non-negative, or vice versa:
\begin{equation}\label{eq:VNLC}
\VNLC\;.
\end{equation}
\item The magnitude of the frictional force at the $\ArbIterB$th contact is bounded above by $\MuMat[\ArbIterB,\ArbIterB]\Fn[\ArbIterB]$. For sliding contacts, the frictional force has magnitude $\MuMat[\ArbIterB,\ArbIterB]\Fn[\ArbIterB]$ and is antiparallel to the sliding velocity. This behavior can be captured as complementarity constraints with the addition of a slack variable $\SVar$:
\begin{align}
&\VTLC\label{eq:VTLC}\;,\\
&\VSLC\label{eq:VSLC}\;.
\end{align}
\end{itemize}

\subsection{Friction at Contact Between Object and Surface}
Coulomb friction behavior cannot readily be applied to contacts where the normal force is not concentrated at a finite set of points. Zhou et al. \cite{ref:Zhou2018} therefore devise and experimentally validate a model that directly approximates the limit surface assuming a constant pressure distribution. They parameterize this behavior with a symmetric, scale-invariant, and strictly convex function $\FMMap(\FNet): \Real^3 \longrightarrow [0,\infty)$, defined over the space of body-axis friction wrenches $\FNet = \GVelJac (q_\theta)^T \FNet[\Obj]$. 
The physical meaning of this function is as follows:
\begin{itemize}
	\item The set of possible static friction wrenches is $\FSet = \{\FMMap(\FNet) \leq 1\}$.
	\item If the contact is sliding, then the maximum dissipation principle requires that $\FNet$ be on the boundary of $\FSet$, $\{\FMMap(\FNet) = 1\}$. Furthermore, $\GVel$ must lie in the normal cone of $\FSet$ at $\FNet$. 
\end{itemize}
As $\FSet$ is strictly convex, the latter condition is exactly satisfied by
\begin{equation}\label{eq:FMAlignment}
\exists k \geq 0, \GVel = k \nabla \FMMap(\FNet)\;.
\end{equation}
We note that \eqref{eq:FMAlignment} will also hold in the case of static friction with $k=0$.

\subsection{Friction Behavior as an LCP}
We examine the quasi-static model of Zhou et al. \cite{ref:Zhou2018}, which composes both point and surface contacts. From (\ref{eq:FMAlignment}), if $\FMMap(\FNet)$ has the ellipsoid form $\FNet^T \tilde{\FMMat} \FNet, \tilde{\FMMat} \succ \ZMat$,
\begin{equation}
	\GVel = k \tilde{\FMMat} \FNet = k \tilde{\FMMat}  \GVelJac (q_\theta)^T \FNet[\Obj]\;,
\end{equation}
\begin{equation}\label{eq:forcemotion}
	\RBVel[\Obj] = \GVelJac (q_\theta)\GVel  = k \GVelJac (q_\theta)\tilde{\FMMat} \GVelJac (q_\theta)^T \FNet[\Obj] = k \FMMat  \FNet[\Obj]\;,
\end{equation}
where $\FMMat = \GVelJac (q_\theta)\tilde{\FMMat} \GVelJac (q_\theta)^T \succ \ZMat$.  Assuming perfect velocity control (i.e. $\RBVel[\Manip] = \RBDVel$), (\ref{eq:VNLC})--(\ref{eq:VSLC}) reduce to $\LCP{\LCPMat}{\LCPVec(\RBDVel)}$, a generalization of (27) in \cite{ref:Zhou2018}, where
\begin{align}
\LCPMat &= \VLCPInfMat[\Obj]\label{eq:ZMatDef}\;,\\
\LCPVec(\RBDVel) &= \VLCPVec\;.\label{eq:ZVecDef}
\end{align}
Each $\LCPVar \in \LCPSol{\LCPMat}{\LCPVec (\RBDVel)}$ is a choice for $[k\Fvar ; \SVar]$ that complies with both the point and surface contact models. While solutions for $k$ and $\Fvar$ are not computed separately, their product is sufficient to calculate object velocity. For simplicity, we will denote $\LCPVar$ as $[\Fvar ; \SVar]$ for the rest of this paper. We also define $\FSet_{\RBDVel} = \{ \Fvar : \exists \SVar , [\Fvar ; \SVar] \in \LCPSol{\LCPMat}{\LCPVec(\RBDVel)} \}$, the set of feasible point contact forces for command velocity $\RBDVel$. For $\RBDVel = \ZVec$, $\FSet_{\RBDVel} = \FSet_{\ZVec}$ is the set of admissible \emph{internal forces} \cite{ref:Murray1994}, i.e. forces that map to zero net force and torque on the object ($\FNet[\Obj] = \ZVec$).
	
For non-quadratic descriptions of $\FMMap(\FNet)$, low accuracy solutions may be computed quickly by approximating $\FMMap$ as an ellipsoid. If higher accuracy solutions are required, one may solve a sequence of programs such that $\FMMat$ in the $\ArbIterB$th program is equal to the Hessian of $H$ evaluated at a solution of $(\ArbIterB-1)$th program.

\section{Finite Velocity Feedback Quasi-statics}

While the above formulation has been successful at simulating pushes \cite{ref:Zhou2018} and planning grasps under stochasticity \cite{ref:Zhou2017b}, the range of applications of this method is significantly limited due to undefined and ambiguous behaviors as displayed in Figure \ref{fig:PinchAndGraze}. Grasping and jamming commands may result in manipulator velocities that cannot be realized without penetrating (see Lemma 1, \cite{ref:Zhou2018}). Additionally, when the manipulator is commanded to graze the object (that is, the commanded velocities of all the contact points are parallel to the object boundary), there can be an infinite set of possible solutions which result in wildly different motions.

The source of this ambiguity is a critical assumption of the perfect velocity control model---that there exists a feedback controller internal to the manipulator that has high enough (essentially, infinite) gain to overcome any external disturbance.
From this perspective, grasping and jamming are undefined as they prescribe two unstoppable objects to oppose each other ("$\infty - \infty$"-like behavior). Additionally, while exact execution of a grazing maneuver may produce zero external disturbance for the manipulator controller to balance, a small perturbation of the commanded velocities towards the inward normal of the object surface could transform the command to an infeasible grasp or jam.
\begin{figure}[h]
\includegraphics[width=6cm]{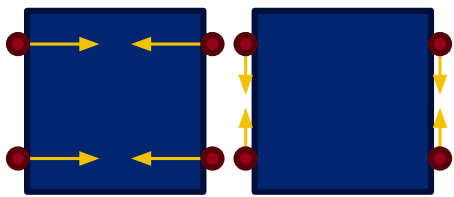}
\centering
\caption{Left: example of a configuration for which a quasi-static model assuming perfect velocity control does not yield any feasible solutions for $\RBVel[\Obj]$. Here, we have a square object (blue squares) with which a four-fingered manipulator (red circles) makes full contact. There are no generalized velocities for the object that would allow the fingers to move along their commanded trajectory (yellow arrows). Right: a configuration that yields ambiguous behavior. Depending on the normal force on the individual fingers, the object may remain stationary, slide upwards, or slide downwards.}\label{fig:PinchAndGraze}
\end{figure}

In order to resolve these issues, we explicitly interpret the quasi-statics of perfect velocity control as the limiting case of the quasi-statics of finite, stable, linear feedback. That is, we assume some relative feedback gain matrix $\FB \succ \ZMat$ and scaling factor $\FBScale$ such that the generalized force due to contact exerted on the manipulator is balanced by a feedback torque
\begin{equation}
	-\FNet[\Manip] = \frac{1}{\FBScale} \FB^{-1} (\RBDVel - \RBVel[\Manip])\;,
\label{eq:ManipForceBal}
\end{equation}
where we note that the gain is inversely proportional to $c$ and the scaling of $B$. While (\ref{eq:ZMatDef}) and (\ref{eq:ZVecDef}) assumed that the manipulator velocity directly tracked the desired velocity, $\RBVel[\Manip]=\RBDVel$, we instead solve (\ref{eq:ManipForceBal}) for $\RBVel[\Manip]$ and construct a new program $\LCP{\LCPMat(\FBScale\FB)}{\LCPVec(\RBDVel)}$ that accounts for the modified velocity:
\begin{align}
\RBVel[\Manip] &=  \RBDVel + \FBScale \FB \FNet[\Manip]\;,\\
\LCPMat(\FBScale\FB) &= \VLCPFinMat\;,\label{eq:QFMMatDef}\\
\LCPVec(\RBDVel) &= \VLCPVec\;.\label{eq:QFMVecDef}
\end{align}
While we have eliminated the perfect velocity control assumption, our formulation introduces no additional computational complexity, as our LCP is of equal dimension. We now prove that our model is well-behaved in the sense that solutions are guaranteed to exist (Theorem \ref{thm:VExist}); the manipulator velocity remains bounded as $c \rightarrow 0$ (Theorem \ref{thm:bddvel}); and solutions typically converge the perfect velocity control case when it is feasible (Theorem \ref{thm:VConv}).

\begin{theorem}\label{thm:VExist}
For all $(\FBScale, \FB, \RBDVel)$, if $\FBScale > 0$ and $\FB \succ 0$, $\LCPSol{\LCPMat(\FBScale\FB)}{\LCPVec(\RBDVel)} \neq \emptyset$.
\end{theorem}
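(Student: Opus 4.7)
The plan is to apply Proposition~\ref{prop:LCPCopExist} directly. This requires two ingredients: (i) copositivity of $\LCPMat(\FBScale\FB)$, and (ii) for every $\LCPVar \in \LCPSol{\LCPMat(\FBScale\FB)}{\ZVec}$, the inequality $\LCPVar^T \LCPVec(\RBDVel) \geq 0$. Both will follow from an explicit computation of the quadratic form $\LCPVar^T \LCPMat(\FBScale\FB) \LCPVar$.

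First I would write $\LCPVar = [\Fn;\Ft;\SVar]$ and expand the quadratic form using the block structure in \eqref{eq:QFMMatDef}. The off-diagonal $\EMat$ and $-\EMat^T$ entries cancel against each other, and the extra $\FBScale\FB$ term contributes only through the $\Fvar$ block, giving
\begin{equation}
\LCPVar^T \LCPMat(\FBScale\FB) \LCPVar = \Fvar^T \Jac[\Obj] \FMMat \Jac[\Obj]^T \Fvar + \FBScale\, \Fvar^T \Jac[\Manip] \FB \Jac[\Manip]^T \Fvar + \SVar^T \MuMat \Fn\,.
\end{equation}
Since $\FMMat \succ \ZMat$, $\FB \succ \ZMat$, $\FBScale > 0$, and $\MuMat$ is a nonnegative diagonal matrix, every term is nonnegative whenever $\LCPVar \geq \ZVec$. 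Hence $\LCPMat(\FBScale\FB)$ is copositive, discharging (i).

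For (ii), suppose $\LCPVar \in \LCPSol{\LCPMat(\FBScale\FB)}{\ZVec}$. Then $\LCPVar^T \LCPMat(\FBScale\FB)\LCPVar = 0$, and since each of the three terms above is individually nonnegative, each must vanish. In particular, from $\Fvar^T \Jac[\Manip] \FB \Jac[\Manip]^T \Fvar = 0$ and $\FB \succ \ZMat$ we conclude $\Jac[\Manip]^T \Fvar = \ZVec$. Now compute
\begin{equation}
\LCPVar^T \LCPVec(\RBDVel) = \Fn^T \NJac[\Manip] \RBDVel + \Ft^T \TJac[\Manip] \RBDVel = \Fvar^T \Jac[\Manip] \RBDVel = (\Jac[\Manip]^T \Fvar)^T \RBDVel = 0\,,
\end{equation}
which gives the required inequality (with equality). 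Proposition~\ref{prop:LCPCopExist} then yields $\LCPSol{\LCPMat(\FBScale\FB)}{\LCPVec(\RBDVel)} \neq \emptyset$.

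The main obstacle is really just bookkeeping: one must correctly identify the cross-terms between $\Ft$ and $\SVar$ and observe they cancel (using $\Ft^T \EMat \SVar = \SVar^T \EMat^T \Ft$), and then exploit the manipulator-feedback block $\FBScale \FB \succ \ZMat$, which is precisely the term absent in the perfect velocity control case. Without this extra positive-definite block, $\Jac[\Manip]^T\Fvar$ need not vanish on the homogeneous solutions, and the test $\LCPVar^T \LCPVec(\RBDVel) \geq 0$ may fail for adversarial $\RBDVel$ (mirroring the non-existence observed in \cite{ref:Zhou2018}). The introduction of finite manipulator compliance is therefore exactly what buys the copositive-plus-style coercivity needed to close the argument.
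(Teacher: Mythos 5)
Your proof is correct and takes essentially the same route as the paper's: both establish copositivity via the decomposition $\LCPVar^T\LCPMat(\FBScale\FB)\LCPVar = \FNet[\Obj]^T\FMMat\FNet[\Obj] + \FBScale\,\FNet[\Manip]^T\FB\FNet[\Manip] + \SVar^T\MuMat\Fn$ (your $\Jac[\Obj]^T\Fvar$ and $\Jac[\Manip]^T\Fvar$ are exactly $\FNet[\Obj]$ and $\FNet[\Manip]$), deduce $\FNet[\Manip]=\ZVec$ on homogeneous solutions from $\FBScale\FB\succ\ZMat$, conclude $\LCPVar^T\LCPVec(\RBDVel)=0$, and invoke Proposition~\ref{prop:LCPCopExist}. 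The only cosmetic difference is that you make explicit the cancellation $\Ft^T\EMat\SVar = \SVar^T\EMat^T\Ft$, which the paper leaves implicit.
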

\begin{proof}
Let $\LCPVar \geq \ZVec$. As all the indices of $\MuMat$ are non-negative, $\SVar^T \MuMat \Fn \geq 0$. We also have that $\FNet[\Obj]^T \FMMat \FNet[\Obj] \geq 0$ and $\FNet[\Manip]^T \FBScale\FB \FNet[\Manip] \geq 0$ as $\FMMat \succ 0$ and $\FBScale\FB \succ 0$. Therefore,
\begin{equation}
\LCPVar^T \LCPMat(\FBScale \FB) \LCPVar = \FNet[\Obj]^T \FMMat \FNet[\Obj] + \FNet[\Manip]^T \FBScale\FB \FNet[\Manip] + \SVar^T \MuMat \Fn \geq 0\;,
\end{equation}
and $\LCPMat(\FBScale\FB)$ is copositive. Now, suppose $\LCPVar \in \LCPSol{\LCPMat(\FBScale\FB)}{\ZVec}$. We must have
\begin{equation}
\FNet[\Obj]^T \FMMat \FNet[\Obj] + \FNet[\Manip]^T \FBScale\FB \FNet[\Manip] + \SVar^T \MuMat \Fn = 0\;,
\end{equation}
which implies $\FNet[\Manip] = \ZVec$. Therefore, $\LCPVar^T\LCPVec(\RBDVel) = \FNet[\Manip]^T\RBDVel = \ZVec$. The result follows from Proposition \ref{prop:LCPCopExist}.
\qed
\end{proof}

If the desired velocity $\RBDVel$ lies within a neighborhood of infeasible velocities (symptomatic of grasping or jamming behavior), then as we take $\FBScale \rightarrow 0$, the force balance in \eqref{eq:ManipForceBal} implies that $\FNet[\Manip]$, and therefore some of the individual finger forces, will grow unboundedly.
However, the net manipulator velocity error $\RBVel[\Manip] - \RBDVel = \FBScale \FB \FNet[\Manip] = \FB \Jac[\Manip]^T(\FBScale \Fvar)$ remains bounded due to the boundedness of $\FBScale \FNet[\Manip]$:
\begin{theorem}\label{thm:bddvel}
Let $\FB \succ \ZVec$. There exists $\ArbRConB(\FB) \in \Real$ such that for all $\FBScale > 0$, for all $\LCPVar \in \LCPSol{\LCPMat(\FBScale\FB)}{\LCPVec(\RBDVel)}$, $\FBScale \Enorm{\FNet[\Manip]} \leq \ArbRConB(\FB) \Enorm{\RBDVel} $.
\end{theorem}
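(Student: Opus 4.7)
The plan is to exploit the LCP complementarity condition together with the positive-definiteness of $cB$, in close parallel with the copositivity argument used in Theorem \ref{thm:VExist}. First, I would take any $\LCPVar = [\Fvar;\SVar] \in \LCPSol{\LCPMat(\FBScale\FB)}{\LCPVec(\RBDVel)}$ and write out the scalar complementarity identity $\LCPVar^T\LCPMat(\FBScale\FB)\LCPVar + \LCPVar^T\LCPVec(\RBDVel) = 0$. Using exactly the same block decomposition as in Theorem \ref{thm:VExist}, the quadratic form factors as
\begin{equation}
\LCPVar^T\LCPMat(\FBScale\FB)\LCPVar = \FNet[\Obj]^T\FMMat\FNet[\Obj] + \FBScale\,\FNet[\Manip]^T\FB\FNet[\Manip] + \SVar^T\MuMat\Fn,
\end{equation}
where $\FNet[\Obj] = \Jac[\Obj]^T\Fvar$ and $\FNet[\Manip] = \Jac[\Manip]^T\Fvar$, while the linear term reduces to $\LCPVar^T\LCPVec(\RBDVel) = \FNet[\Manip]^T\RBDVel$ because only the manipulator-indexed blocks of $\LCPVec(\RBDVel)$ are nonzero.

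Next I would discard the non-negative terms. Since $\FMMat \succ \ZMat$, $\MuMat$ is diagonal with non-negative entries, and $\SVar,\Fn \geq \ZVec$, both $\FNet[\Obj]^T\FMMat\FNet[\Obj]$ and $\SVar^T\MuMat\Fn$ are non-negative. The complementarity equality therefore yields
\begin{equation}
\FBScale\,\FNet[\Manip]^T\FB\FNet[\Manip] \leq -\FNet[\Manip]^T\RBDVel \leq \Enorm{\FNet[\Manip]}\,\Enorm{\RBDVel},
\end{equation}
by Cauchy--Schwarz. Since $\FB \succ \ZMat$, its minimum eigenvalue $\ArbEVal_{\min}(\FB)$ is strictly positive, and $\FNet[\Manip]^T\FB\FNet[\Manip] \geq \ArbEVal_{\min}(\FB)\,\Enorm{\FNet[\Manip]}^2$. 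Dividing by $\Enorm{\FNet[\Manip]}$ when it is nonzero (and noting the inequality is trivial otherwise) gives
\begin{equation}
\FBScale\,\Enorm{\FNet[\Manip]} \leq \frac{1}{\ArbEVal_{\min}(\FB)}\Enorm{\RBDVel},
\end{equation}
so the choice $\ArbRConB(\FB) = 1/\ArbEVal_{\min}(\FB)$ works and depends only on $\FB$, not on $\FBScale$ or $\RBDVel$.

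There is no serious obstacle: the key algebraic decomposition is the same one already exploited in the copositivity proof of Theorem \ref{thm:VExist}, and the only extra ingredient is that the $\FBScale\FB$ block contributes a genuine positive-definite quadratic (rather than merely non-negative) term that can be used to bound $\FBScale\,\Enorm{\FNet[\Manip]}$ from above. The mild subtlety worth flagging in the write-up is that the bound is on the scaled quantity $\FBScale\,\Enorm{\FNet[\Manip]}$, which is exactly what the following convergence argument (Theorem \ref{thm:VConv}) will need as $\FBScale \to 0$, even though $\Enorm{\FNet[\Manip]}$ itself may blow up when $\RBDVel$ encodes an infeasible grasp or jam.
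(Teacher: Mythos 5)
Your proposal is correct and follows essentially the same route as the paper: both extract the scalar complementarity identity, use the same block decomposition of $\LCPVar^T\LCPMat(\FBScale\FB)\LCPVar$ from Theorem \ref{thm:VExist}, drop the non-negative terms $\FNet[\Obj]^T\FMMat\FNet[\Obj]$ and $\SVar^T\MuMat\Fn$, and then combine Cauchy--Schwarz with the minimum eigenvalue of $\FB$ to obtain $\FBScale\Enorm{\FNet[\Manip]} \leq \Enorm{\RBDVel}/\ArbEVal_{\min}(\FB)$. Your explicit handling of the $\Enorm{\FNet[\Manip]} = 0$ case is a small tidiness improvement over the paper's write-up, but the argument is the same.
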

\begin{proof}
Let $\LCPVar \in \LCPSol{\LCPMat(\FBScale \FB)}{\LCPVec(\RBDVel)}$. From Definition \ref{def:LCPDef}, we have that 
\begin{align}
\LCPVar &\geq 0\;,\\
\FNet[\Obj]^T \FMMat \FNet[\Obj] + \FNet[\Manip]^T \FBScale \FB \FNet[\Manip] + \SVar^T \MuMat \Fn + \FNet[\Manip]^T\RBDVel &= 0\;.\label{eq:QFMComp}
\end{align}
Let $\ArbEVal_{min} > 0$ be the minimum eigenvalue of $\FB$. Noting $\FMMat \succ \ZVec$, we have that
\begin{align}
\FBScale \FNet[\Manip]^T \FB \FNet[\Manip] + \FNet[\Manip]^T\RBDVel & \leq 0\;,\\
\ArbEVal_{min} \FBScale \Enorm{\FNet[\Manip]}^2+ \FNet[\Manip]^T\RBDVel & \leq 0\;,\\
\ArbEVal_{min} \FBScale \Enorm{\FNet[\Manip]}^2 & \leq \Enorm{\FNet[\Manip]}\Enorm{\RBDVel}\;, \\
\FBScale \Enorm{\FNet[\Manip]} & \leq \dfrac{1}{\ArbEVal_{min}}\Enorm{\RBDVel} \;.
\end{align}
\qed
\end{proof}

Intuitively, if individual finger forces grow without bound, yet the net force remains bounded, then there must be a ``canceling out'' effect. More precisely, the portion of $\Fvar[\ArbIter]$ that experiences this growth must be an internal force:

\begin{theorem}\label{coro:nulldisp}
 For all sequences $\Seq{\LCPVar}{\ArbIter}$ and $\Seq{\FBScale}{\ArbIter}$ such that $\FBScale[\ArbIter] \rightarrow 0$ and $\LCPVar_\ArbIter \in \LCPSol{\LCPMat(\FBScale[\ArbIter] \FB)}{\LCPVec(\RBDVel)}$, $\FBScale[\ArbIter]  \Fvar[\ArbIter] \rightarrow {\FSet}_{\ZVec}$
\end{theorem}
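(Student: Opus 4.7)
The target is set-valued convergence, $\mathrm{dist}(\FBScale[\ArbIter]\Fvar[\ArbIter],\FSet_{\ZVec}) \to 0$. My plan is to show that every $\FBScale[\ArbIter]\Fvar[\ArbIter]$ automatically satisfies the inequalities defining $\FSet_{\ZVec}$ and violates only a single linear equality, by an amount vanishing with $\FBScale[\ArbIter]$; Hoffman's error bound for polyhedra then delivers the conclusion uniformly, without requiring any boundedness of $\FBScale[\ArbIter]\Fvar[\ArbIter]$ itself.

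First I would characterize $\FSet_{\ZVec}$ explicitly. Setting $\RBDVel = \ZVec$ in $\LCP{\LCPMat}{\LCPVec(\RBDVel)}$, the complementarity identity used in Theorem~\ref{thm:VExist} becomes $\FNet[\Obj]^T\FMMat\FNet[\Obj] + \SVar^T\MuMat\Fn = 0$, which forces $\FNet[\Obj] = \ZVec$ (using $\FMMat \succ \ZMat$) and $\SVar^T\MuMat\Fn = 0$. The remaining LCP constraints are then always verified by the slack choice $\SVar = \ZVec$, so
\[ \FSet_{\ZVec} = \{\Fvar \geq \ZVec : \Jac[\Obj]^T\Fvar = \ZVec,\ \MuMat\Fn - \EMat^T\Ft \geq \ZVec\}, \]
a nonempty closed convex polyhedral cone. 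Next, I would bound the residual of $\Jac[\Obj]^T\Fvar = \ZVec$ along the scaled sequence. Expanding $\LCPVar_\ArbIter^T(\LCPMat(\FBScale[\ArbIter]\FB)\LCPVar_\ArbIter + \LCPVec(\RBDVel)) = 0$ exactly as in the proof of Theorem~\ref{thm:bddvel} yields
\[ \FNet[\Obj]^T\FMMat\FNet[\Obj] + \FBScale[\ArbIter]\FNet[\Manip]^T\FB\FNet[\Manip] + \SVar^T\MuMat\Fn = -\FNet[\Manip]^T\RBDVel \leq \Enorm{\FNet[\Manip]}\Enorm{\RBDVel}. \]
Dropping the two nonnegative middle terms and combining with Theorem~\ref{thm:bddvel} (which gives $\Enorm{\FNet[\Manip]} \leq \ArbRConB(\FB)\Enorm{\RBDVel}/\FBScale[\ArbIter]$) yields $\ArbEVal_{min}(\FMMat)\Enorm{\FNet[\Obj]}^2 \leq \ArbRConB(\FB)\Enorm{\RBDVel}^2/\FBScale[\ArbIter]$, and therefore $\Enorm{\Jac[\Obj]^T(\FBScale[\ArbIter]\Fvar[\ArbIter])} = \FBScale[\ArbIter]\Enorm{\FNet[\Obj]} \leq K\sqrt{\FBScale[\ArbIter]}$ for a constant $K$ depending only on $\FB$, $\FMMat$, and $\RBDVel$.

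The cone inequalities defining $\FSet_{\ZVec}$ are inherited outright by the scaled sequence, since $\FBScale[\ArbIter] > 0$ and $\Fvar[\ArbIter] \geq \ZVec$ give $\FBScale[\ArbIter]\Fvar[\ArbIter] \geq \ZVec$, and $\MuMat(\FBScale[\ArbIter]\Fn) - \EMat^T(\FBScale[\ArbIter]\Ft) = \FBScale[\ArbIter](\MuMat\Fn - \EMat^T\Ft) \geq \ZVec$ by LCP feasibility. Applying Hoffman's error bound for polyhedra to $\FSet_{\ZVec}$ produces a constant $\kappa$ depending only on $(\Jac[\Obj],\MuMat,\EMat)$ such that $\mathrm{dist}(\FBScale[\ArbIter]\Fvar[\ArbIter],\FSet_{\ZVec}) \leq \kappa K\sqrt{\FBScale[\ArbIter]} \to 0$. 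The main obstacle is the clean invocation of Hoffman's lemma; a more self-contained alternative would split on whether $\FBScale[\ArbIter]\Fvar[\ArbIter]$ is bounded and, in the unbounded case, normalize to extract an accumulation direction in $\FSet_{\ZVec}$, but this requires delicate estimates on how fast the residual decays relative to $\Enorm{\FBScale[\ArbIter]\Fvar[\ArbIter]}$ itself, estimates that Hoffman sidesteps entirely.
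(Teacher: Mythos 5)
Your proof is correct, and it shares its skeleton with the paper's argument: both characterize $\FSet_{\ZVec}$ polyhedrally (the paper only proves the inclusion $\{\Fvar \geq \ZVec : \Jac[\Obj]^T\Fvar = \ZVec,\ \MuMat\Fn - \EMat^T\Ft \geq \ZVec\} \subseteq \FSet_{\ZVec}$ via the slack choice $\SVar = \ZVec$, which is all that the convergence claim needs; your full equality is a correct strengthening), both observe that the nonnegativity and friction-cone inequalities are inherited exactly by the scaled iterates $\FBScale[\ArbIter]\Fvar[\ArbIter]$, and both attack the equality residual $\Jac[\Obj]^T(\FBScale[\ArbIter]\Fvar[\ArbIter])$ through the complementarity identity \eqref{eq:QFMComp} combined with the boundedness of $\FBScale[\ArbIter]\FNet[\Manip,\ArbIter]$ from Theorem \ref{thm:bddvel}. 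Where you genuinely diverge is the finish. The paper multiplies \eqref{eq:QFMComp} by $\FBScale[\ArbIter]^2$, passes to a qualitative limit to get $\Jac[\Obj]^T(\FBScale[\ArbIter]\Fvar[\ArbIter]) \rightarrow \ZVec$, and then concludes set convergence outright; that final inference---vanishing residual of the defining constraints implies vanishing distance to the set, with no a priori boundedness of $\FBScale[\ArbIter]\Fvar[\ArbIter]$---is left implicit in the paper, and it is exactly the step that can fail for general closed convex sets when the sequence is unbounded. Your invocation of Hoffman's error bound is the standard, correct device that licenses it for a polyhedron, so your write-up is if anything more complete than the paper's, and it buys two things the paper's proof does not: an explicit rate $\mathrm{dist}(\FBScale[\ArbIter]\Fvar[\ArbIter], \FSet_{\ZVec}) = O(\sqrt{\FBScale[\ArbIter]})$ rather than bare convergence, and uniformity over solution selections, since you route the estimate through Theorem \ref{thm:bddvel} directly instead of through a subsequential limit. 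Two minor notational points: your constant should carry the command magnitude, $K = \sqrt{\ArbRConB(\FB)/\ArbEVal_{min}(\FMMat)}\,\Enorm{\RBDVel}$, consistent with the dependence you state; and, as in the paper, it is $\FMMat \succ \ZMat$ that guarantees $\ArbEVal_{min}(\FMMat) > 0$, so that hypothesis should be cited where you lower-bound $\FNet[\Obj]^T\FMMat\FNet[\Obj]$.
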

\begin{proof}
Let $\tilde{\FSet}_{\ZVec} = \{ \Fvar \geq \ZVec : \Jac[\Obj]^T \Fvar = \ZVec \wedge \MuMat \Fn - \EMat^T \Ft \geq \ZVec \}$. $\tilde{\FSet}_{\ZVec} \subseteq {\FSet}_{\ZVec}$ by constructing $\tilde{\LCPVar} = [\tilde{\Fvar[]}; \ZVec] \in \LCPSol{\LCPMat(\ZMat)}{\ZVec}$ from $\tilde{\Fvar[]} \in \tilde{\FSet}_{\ZVec}$. It is therefore sufficient to show $\FBScale[\ArbIter]  \Fvar[\ArbIter] \rightarrow \tilde{\FSet }_{\ZVec}$.

We first note that $\FBScale[\ArbIter] {\LCPVar}_{\ArbIter} \geq \ZVec$, and $\MuMat \FBScale[\ArbIter] \Fn[\ArbIter] - \EMat^T \FBScale[\ArbIter] \Ft[\ArbIter] \geq \ZVec$ follow directly from $\LCPVar_\ArbIter \in \LCPSol{\LCPMat(\FBScale[\ArbIter] \FB)}{\LCPVec(\RBDVel)}$.  Multiplying (\ref{eq:QFMComp}) by $\FBScale[\ArbIter]^2$, we have
\begin{equation}
\FBScale[\ArbIter]\FNet[\Obj,\ArbIter]^T \FMMat \FNet[\Obj,\ArbIter]\FBScale[\ArbIter] + \FBScale[\ArbIter]^3 \FNet[\Manip,\ArbIter]^T \FB \FNet[\Manip,\ArbIter] + \FBScale[\ArbIter] \SVar[\ArbIter]^T \MuMat \Fn[\ArbIter] \FBScale[\ArbIter] + \FBScale[\ArbIter]^2 \FNet[\Manip,\ArbIter]^T\RBDVel = 0\;.
\end{equation}
In the limit, the second and fourth terms in this equation vanish due to the boundedness of $\FBScale[\ArbIter] \FNet[\Manip,\ArbIter]$, rendering
\begin{equation}
\FBScale[\ArbIter]\FNet[\Obj,\ArbIter]^T \FMMat \FNet[\Obj,\ArbIter]\FBScale[\ArbIter] + \FBScale[\ArbIter] \SVar[\ArbIter]^T \MuMat \Fn[\ArbIter] \FBScale[\ArbIter] \rightarrow 0\;.
\end{equation}
As $\FMMat \succ 0$, $\FBScale[\ArbIter]\FNet[\Obj,\ArbIter] = \Jac[\Obj]^T \FBScale[\ArbIter] \Fvar[\ArbIter] \rightarrow \ZVec $, and therefore $\FBScale[\ArbIter]  \Fvar[\ArbIter] \rightarrow \tilde{\FSet }_{\ZVec}$.
\qed

\end{proof}

$\FSet_{\ZVec}$ can therefore be considered a basis from which one can generate the errors in the manipulator velocity; for $\FBScale$ sufficiently close to $0$, there exists a $\Di \in \FSet_{\ZVec}$, such that $\RBVel[\Manip] - \RBDVel \approx \FB \Jac[\Manip]^T \Di$. When there are no non-zero internal forces, the manipulator displacement approaches zero, and the solution of the finite linear feedback model approaches the behavior for perfect velocity control:

\begin{theorem}\label{thm:VConv}
Suppose that $\LCPMat(\ZMat)$ is chosen such that $\FSet_{\ZVec} = \{ \ZVec \}$. For all sequences $\Seq{\LCPVar}{\ArbIter}$,$\Seq{\FBScale}{\ArbIter}$ such that $\FBScale[\ArbIter] \rightarrow 0$ and $\LCPVar_\ArbIter \in \LCPSol{\LCPMat(\FBScale[\ArbIter] \FB)}{\LCPVec(\RBDVel)}$, then $\Fvar[\ArbIter] \rightarrow \FSet_{\RBDVel}$. 
\end{theorem}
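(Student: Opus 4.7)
The plan is first to bound $\{\Fvar[\ArbIter]\}$, then to show that every subsequential limit of $\Fvar[\ArbIter]$ lies in $\FSet_{\RBDVel}$, whence $d(\Fvar[\ArbIter],\FSet_{\RBDVel})\to 0$ by bounded-sequence compactness. Theorem~\ref{coro:nulldisp} applied under $\FSet_{\ZVec}=\{\ZVec\}$ immediately gives $\FBScale[\ArbIter]\Fvar[\ArbIter]\to\ZVec$, so the block $\FBScale[\ArbIter]\FB\Jac[\Manip]^T\Fvar[\ArbIter]$ that distinguishes $\LCPMat(\FBScale[\ArbIter]\FB)$ from $\LCPMat(\ZMat)$ vanishes along the sequence; this is what licenses continuous passage to the limit in the LCP constraints.

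For boundedness of $\Fvar[\ArbIter]$ I argue by contradiction. Suppose $\Enorm{\Fvar[\ArbIter_j]}\to\infty$ along a subsequence, and set $\hat{\Fvar}_j=\Fvar[\ArbIter_j]/\Enorm{\Fvar[\ArbIter_j]}$. Passing to a further subsequence, $\hat{\Fvar}_j\to\hat{\Fvar}^*\geq\ZVec$ with $\Enorm{\hat{\Fvar}^*}=1$. Dividing~\eqref{eq:QFMComp} by $\Enorm{\Fvar[\ArbIter_j]}^2$ and invoking Theorem~\ref{thm:bddvel} to bound $\FBScale[\ArbIter_j]\FNet[\Manip,\ArbIter_j]$, the second and fourth terms of the scaled identity vanish as $j\to\infty$; the two remaining summands are nonnegative and therefore must individually vanish. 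Since $\FMMat\succ\ZMat$, the first of these forces $\Jac[\Obj]^T\hat{\Fvar}^*=\ZVec$. Together with $\hat{\Fvar}^*\geq\ZVec$ and the scaled slack-row inequality $\MuMat\hat{\Fn}^*-\EMat^T\hat{\Ft}^*\geq\ZVec$, the identification $\tilde{\FSet}_{\ZVec}=\FSet_{\ZVec}$ from the proof of Theorem~\ref{coro:nulldisp} places $\hat{\Fvar}^*\in\FSet_{\ZVec}=\{\ZVec\}$, contradicting $\Enorm{\hat{\Fvar}^*}=1$.

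With $\Fvar[\ArbIter]$ now bounded, let $\Fvar[\ArbIter_j]\to\Fvar^*$ be any convergent subsequence. Passing to the limit in the normal-contact and friction-cone rows and their complementarities---each of which involves only $\Fvar$ and the vanishing perturbation---shows $\Fvar^*$ satisfies those constraints. To exhibit a slack $\SVar^*\geq\ZVec$ witnessing $\Fvar^*\in\FSet_{\RBDVel}$, I would replace $\SVar[\ArbIter_j]$ with a canonical slack $\tilde{\SVar}[\ArbIter_j]_k$ set equal to the magnitude of the signed tangential sliding velocity at contact $k$ in the $\ArbIter_j$th configuration. A routine case analysis over contact activity (whether $\Fn_k$ and $\Ft_k$ are zero or positive) verifies that $[\Fvar[\ArbIter_j];\tilde{\SVar}[\ArbIter_j]]$ remains an LCP solution; Theorem~\ref{thm:bddvel} together with the boundedness of $\Fvar[\ArbIter_j]$ makes $\tilde{\SVar}[\ArbIter_j]$ bounded. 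Extracting a further convergent sub-subsequence $\tilde{\SVar}[\ArbIter_j]\to\SVar^*$ and passing to the limit in all constraints yields $[\Fvar^*;\SVar^*]\in\LCPSol{\LCPMat(\ZMat)}{\LCPVec(\RBDVel)}$, so $\Fvar^*\in\FSet_{\RBDVel}$. The principal obstacle is precisely this last step: the original slack $\SVar[\ArbIter_j]$ need not be bounded (at inactive contacts the LCP constrains the slack only from below, with no upper bound), so direct limit-passage of the full LCP solution is unavailable, and the canonical bounded substitution is essential.
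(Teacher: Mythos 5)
Your proof is correct, and it rests on the same two load-bearing ideas as the paper's: Theorem~\ref{coro:nulldisp} to make the perturbation $\FBScale[\ArbIter]\FB\FNet[\Manip,\ArbIter]$ vanish, and division of the complementarity identity \eqref{eq:QFMComp} by $\Enorm{\Fvar[\ArbIter]}^2$ to rule out force blow-up via $\FSet_{\ZVec}=\{\ZVec\}$. But you arrange these differently, and in one place more completely. The paper argues by contradiction: it assumes the sequence is bounded away from $\LCPSol{\LCPMat(\ZMat)}{\LCPVec(\RBDVel)}$, invokes Proposition~\ref{prop:LCPCopExist} to establish non-emptiness of $\FSet_{\RBDVel}$, rules out a bounded subsequence (whose limit point would land in $\FSet_{\RBDVel}$), and then kills the remaining unbounded case with the normalization trick. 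You instead run the normalization argument first to get boundedness of the entire sequence outright---correctly noting that this step never needs a contradiction hypothesis about distance to the solution set---and then pass to subsequential limits; this buys you non-emptiness of $\FSet_{\RBDVel}$ for free from compactness, with no appeal to Proposition~\ref{prop:LCPCopExist}. The more substantive difference is your canonical-slack substitution. The paper's limit-passage step (``$\Fvar[\ArbIter_{\ArbIterB}]$ would have a limit point in $\FSet_{\RBDVel}$'') is stated in a single sentence and quietly requires exhibiting a witness slack for the limit force; as you observe, the original $\SVar[\ArbIter_{\ArbIterB}]$ need not be bounded at contacts where both $\Fn$ and $\Ft$ vanish, so one cannot simply take limits of the full LCP solutions. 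The paper gestures at the issue only through the remark that $\SVar[\ArbIter]^T\MuMat\Fn[\ArbIter]$ stays bounded, leaving the construction implicit. Your replacement of the slack by the sliding-speed magnitudes is exactly the right repair: the slack enters the LCP only through the $\EMat\SVar$ term in the tangential rows and through the third block row, so your case analysis over contact activity goes through, and the substituted slack is bounded by linear functions of $\Fvar[\ArbIter_{\ArbIterB}]$ and $\RBDVel$ via Theorem~\ref{thm:bddvel} (or via $\FBScale[\ArbIter]\Fvar[\ArbIter]\to\ZVec$ from Theorem~\ref{coro:nulldisp}). One nitpick: you cite an ``identification $\tilde{\FSet}_{\ZVec}=\FSet_{\ZVec}$,'' but the proof of Theorem~\ref{coro:nulldisp} only establishes the inclusion $\tilde{\FSet}_{\ZVec}\subseteq\FSet_{\ZVec}$---fortunately that is the direction your argument actually uses, since you place the normalized limit in $\tilde{\FSet}_{\ZVec}$ and need it in $\FSet_{\ZVec}=\{\ZVec\}$.
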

\begin{proof}
Assume the contrary, so that there exists $\Seq{\FBScale}{\ArbIter} \rightarrow 0$ and $\Seq{\LCPVar}{\ArbIter}$ bounded away from $\LCPSol{\LCPMat(\ZMat)}{\LCPVec(\RBDVel)}$ such that $\LCPVar_\ArbIter \in \LCPSol{\LCPMat(\FBScale[\ArbIter] \FB)}{\LCPVec(\RBDVel)}$. Letting ${\RBVel[]}_{\ArbIter} = \RBDVel + {\ArbRCon}_{\ArbIter} \FB \FNet[\Manip,\ArbIter]$, we have that ${\LCPVar}_{\ArbIter} \in \LCPSol{\LCPMat(\ZMat)}{\LCPVec({\RBVel[]}_{\ArbIter})}$. ${\ArbRCon}_{\ArbIter} {\Fvar}_{\ArbIter} \rightarrow \ZVec$, given by Theorem \ref{coro:nulldisp}, implies ${\RBVel[]}_{\ArbIter} \rightarrow \RBDVel$. For all $\ArbIter$, we observe the complementarity condition
\begin{equation}\label{eq:nthcomp}
\FNet[\Obj,\ArbIter]^T\FMMat\FNet[\Obj,\ArbIter] + \SVar[\ArbIter] \MuMat \Fn[\ArbIter] + \FNet[\Manip,\ArbIter]^T{\RBVel[]}_{\ArbIter} = \ZVec\;,
\end{equation}
which implies that $\SVar[\ArbIter]\MuMat \Fn[\ArbIter]$ is bounded for bounded $\Enorm{\Fvar[\ArbIter]}$ .
As $\FSet_{\ZVec} = \{ \ZVec \}$, for all $\LCPVar \in \LCPSol{\LCPMat(\ZMat)}{\ZVec}$, we have $\LCPVar^T \LCPVec(\RBDVel) = 0$. Therefore by Proposition \ref{prop:LCPCopExist}, $\FSet_{\RBDVel}$ and $\LCPSol{\LCPMat(\ZMat)}{\LCPVec(\RBDVel)}$ are non-empty.

If there were a subsequence ${\LCPVar}_{\ArbIter_{\ArbIterB}}$ such that ${\Fvar}_{\ArbIter_{\ArbIterB}}$ were bounded, then from (\ref{eq:nthcomp}) and the non-emptiness of $\FSet_{\RBDVel}$,  ${\Fvar}_{\ArbIter_{\ArbIterB}}$ would have a limit point in $\FSet_{\RBDVel}$, violating our assumptions. Therefore, we must have $\Enorm{{\Fvar}_{\ArbIter}} \rightarrow \infty$. Similar to Theorem \ref{coro:nulldisp}, by dividing (\ref{eq:nthcomp}) by $\Enorm{\Fvar[\ArbIter]}^2$,
\begin{equation}
\frac{1}{\Enorm{\Fvar[\ArbIter]}}\FNet[\Obj,\ArbIter]^T \FMMat \FNet[\Obj,\ArbIter]\frac{1}{\Enorm{\Fvar[\ArbIter]}}  \rightarrow 0 \;,
\end{equation}
and thus $\tilde{\Fvar[]}_{\ArbIter} = \frac{{\Fvar[]}_{\ArbIter}}{\Enorm{\Fvar[\ArbIter]}} \rightarrow \FSet_{\ZVec} = \{ \ZVec \}$. But $\Enorm{\tilde{\Fvar[]}_{\ArbIter}} = \frac{\Enorm{\Fvar[\ArbIter]}}{\Enorm{\Fvar[\ArbIter]}}= 1$, a contradiction.
\qed
\end{proof}

We note that $\FSet_{\ZVec} = \{ \ZVec \}$ implies the current configuration is not a force-closure \cite{ref:Murray1994}. In practice, we expect the set of non-force-closure configurations that have $\FSet_{\ZVec} \neq \{ \ZVec \}$ to be small. We also expect perfect velocity control models to behave poorly during force-closure, as they exhibit grasping behavior. Therefore, this model will behave more realistically in a variety of scenarios, with minimal accuracy loss for prehensile commands that perfect velocity control models handle well already.

\section{Time-Stepping Scheme}

Despite the guarantee (Theorem~\ref{thm:VExist}) that (\ref{eq:QFMMatDef}) and (\ref{eq:QFMVecDef}) provide feasible instantaneous velocity solutions, embedding the LCP into common ODE schemes is an incomplete approach, as the resulting velocities will be discontinuous whenever contact is initiated between the manipulator and objects.
Anitescu and Potra \cite{ref:Anitescu2002} resolved a similar issue in their formulation for 3D multibody simulation by root-finding the first sub-time-step impact. While a similar modification could be applied to our LCP, the ability to resolve sub-time-step impacts in a single LCP would be beneficial. To that end, we take inspiration from Stewart and Trinkle \cite{ref:Stewart1996}, and instead formulate an alternative LCP that explicitly models the positions of the manipulator and object at the end of the time-step.
\begin{comment}
	\begin{figure}[h]
\includegraphics[width=10cm]{FaultyTimestep.pdf}
\centering
\caption{Configuration of the system from Figure \ref{fig:PinchAndGraze} for which a velocity-based quasi-static formulation will yield an instantaneously feasible velocity that produces an infeasible configuration at the end of a time-step. On the left, we see the configuration at time $t$. The commanded velocity (red) for the four fingers is again shown in red. As no contact is currently being made, all contact forces are zero and our method returns $\RBVel[\Manip] = \RBDVel$ and $\RBVel[\Obj] = \ZVec$. However, if we estimate $\RBPos[\Manip](t+h) \approx \RBPos[\Manip](t) + h\RBDVel$ using the forward Euler scheme, we arrive at the interpenetration shown on the right. At some time $t' \in (t,t+h)$, a collision would have occurred. The squeezing velocity would have been arrested at this time, but this simulation method is unable to capture that behavior.}\label{fig:FaultyTimestep}
\end{figure}
\end{comment}

To construct this new program, instead of solving for the force at a particular time $t$, we solve for the the net normal and tangential impulse between times $t$ and $t+h$ at each contact, $\DFn$ and $\DFt$. Using the superscripts $\TStart$ and $\TEnd$ to denote values calculated at the beginning and end of this interval, we linearize $\Dist$ as
\begin{equation}
	\DDist = \Dist^\TEnd - \Dist^\TStart \approx \PVDiff{\Dist[]}{\RBPos[\Obj]}^\TStart \DRBPos[\Obj] + \PVDiff{\Dist[]}{\RBPos[\Manip]}^\TStart \DRBPos[\Manip] = \NJac[\Obj]^\TStart \DRBPos[\Obj] + \NJac[\Manip]^\TStart \DRBPos[\Manip]\;,\label{eq:DDist}
\end{equation}
and we make a first-order approximation of $\DRBPos[\Obj]$ and $\DRBPos[\Manip]$:
\begin{align}
\DRBPos[\Obj] = \RBPos[\Obj]^\TEnd - \RBPos[\Obj]^\TStart &\approx h \RBVel[\Obj]^\TEnd \approx \FMMat \parenEnc{{\NJac[\Obj]^{\TStart T}}  \DFn + {\TJac[\Obj]^{\TStart T}}  \DFt}\;,\\
\DRBPos[\Manip] = \RBPos[\Manip]^\TEnd - \RBPos[\Manip]^\TStart &\approx h \RBVel[\Manip]^\TEnd \approx \FB \parenEnc{{\NJac[\Manip]^{\TStart T}}  \DFn + {\TJac[\Manip]^{\TStart T}}  \DFt} + h\RBDVel^\TStart\;.
\end{align}
$\DFn$ and $\DFt$ are subject to the complementarity constraints
\begin{align}
&\PNLC\;,\\
&\PTLC\;,\\
&\PSLC\;.
\end{align}
Arranging into the standard format, we arrive at $\LCP{\tilde{\LCPMat}(\FB)}{\tilde{\LCPVec}(\RBDVel^\TStart)}$, where 
\begin{align}
\tilde{\LCPMat}(\FBScale \FB) &= \LCPMat(\FBScale \FB)^\TStart\;,\\
\tilde{\LCPVec}(\RBDVel^\TStart)  &= \LCPVec(h\RBDVel^\TStart)^\TStart + \begin{bmatrix} \Dist \\ \ZVec \end{bmatrix}\;.
\end{align}
With the exception of the added $\Dist^\TStart$ in $\tilde{\LCPVec}$, this program is identical to an instantiation of the velocity formulation defined in (\ref{eq:QFMMatDef}) and (\ref{eq:QFMVecDef}). Noting that $\Dist^\TStart \geq \ZVec$ for any feasible initial condition, a trivial extension of Theorem \ref{thm:VExist} guarantees existence of solutions for all feasible initial conditions. This is a significant improvement over existing time-stepping schemes for the full dynamic behavior, as it circumvents both the expensive root-finding subroutine of \cite{ref:Anitescu2002} and the non-existance issues in \cite{ref:Stewart1996}. However, it is important to note that if $\Dist (\RBPos)$ is non-linear, the linearization in (\ref{eq:DDist}) does not guarantee that the true value of $\Dist^\TEnd$ will be feasible (positive), even if $\Dist^\TStart$ is feasible.
In these cases, similar to \cite{ref:Stewart1996}, one may rectify this issue by solving a sequence of problems in a fixed-point iteration scheme, linearizing the problem about the best current estimate for $(\RBPos[\Obj]^\TEnd,\RBPos[\Manip]^\TEnd)$. This iteration can be conducted at the same time as the iteration for a non-ellipsoidal $\FMMap$. We additionally note that setting $\FB = \ZMat$ gives a time-stepping reformulation of the program in \cite{ref:Zhou2018}.	

\section{Examples}
We provide a few examples to illustrate the capabilities  and accuracy of our approach. Three examples from our open source MATLAB library\footnote{\url{https://github.com/mshalm/quasistaticGrasping}} are provided in conjunction with a video depiction\footnote{\url{https://www.youtube.com/watch?v=1wAH5o3OLck}}. Additionally, we compare the output of our model to a fully dynamic, compliant simulation using Drake \cite{ref:Tedrake2016}. All LCPs associated with our model are solved with PATH \cite{ref:Dirske1995}.
\subsection{Pushing with Two Fingers}
\label{subsec:twoFingerPush}
\begin{figure}[h]
\vspace*{-4mm}
\hspace*{0mm}
\includegraphics[width=12cm]{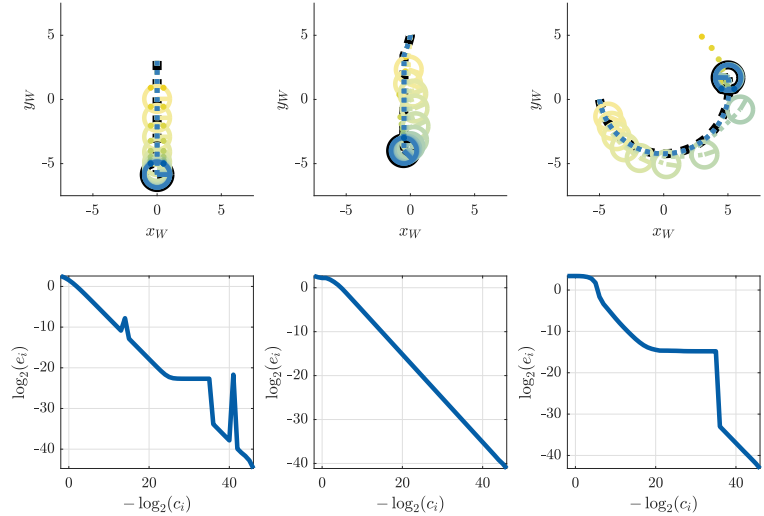}
\vspace*{-6mm}
\centering
\caption{\textbf{Top}: three maneuvers are executed at varying feedback intensities: a symmetric push, an asymmetric push, and a semicircular push. The black dotted lines and circles represent the center of mass trajectory and finale pose for $\RBPos[\Obj,\infty]$. The finite feedback trajectories are respresented similarly, with the color shifting from yellow to blue as the feedback gain ${\ArbRCon}_{\ArbIter}^{-1}$ increases. \textbf{Bottom}: log-log plot comparing $\FBScale[\ArbIter]$ and $e_{\ArbIter}$ for the corresponding commands.}
\label{fig:TripleConvergence}
\end{figure}

We first consider a flat disk of radius $1\,\textrm{m}$ pushed by two fully-actuated point fingers. The coefficient of friction between the fingers and the disk is set at $1$, and the force-motion map is set as $\FMMap(\FNet[\Obj]) = \FNet[\Obj]^T\FNet[\Obj]$ such that $\nabla^2 \FMMap = \FMMat = \Eye_3$. We consider the configuration $\RBPos[\Manip] = \begin{bmatrix} q_{x,1} & q_{y,1} & q_{x,2} & q_{y,2} \end{bmatrix}^T$ which directly represents the $x-y$ coordinates of each finger in a fixed frame. We assume each coordinate is controlled independently with equal gain, such that $\FB = \Eye_4 \, \textrm{m s}^{-1}\textrm{ N}^{-1}$.

We now empirically evaluate the validity of Theorem \ref{thm:VConv}. Analytically, we expect that as we take ${\ArbRCon}_{\ArbIter} \rightarrow 0$, finite feedback simulation should converge to perfect velocity command tracking. We simulate three motions over $t \in (0,10)\,\textrm{s}$ at $40\,\textrm{Hz}$ for various feedback scaling terms ${\ArbRCon}_{\ArbIter}$, and plot the corresponding object trajectories $\RBPos[\Obj,\ArbIter](t)$ in Figure \ref{fig:TripleConvergence}. We compare the final configuration of each finite feedback trajectory with $\RBPos[\Obj,\infty](t)$, the trajectory resultant from executing the same manipulator command with a perfect velocity control. For sufficiently high gains, we see a linear relationship between the log of the feedback scaling term and that of $e_\ArbIter = \Enorm{\RBPos[\Obj,\ArbIter](10) - \RBPos[\Obj,\infty](10)}$, the error in the final pose.
\begin{figure}[h]
\centering
\vspace*{-4mm}
\subfloat{\includegraphics[width=7cm]{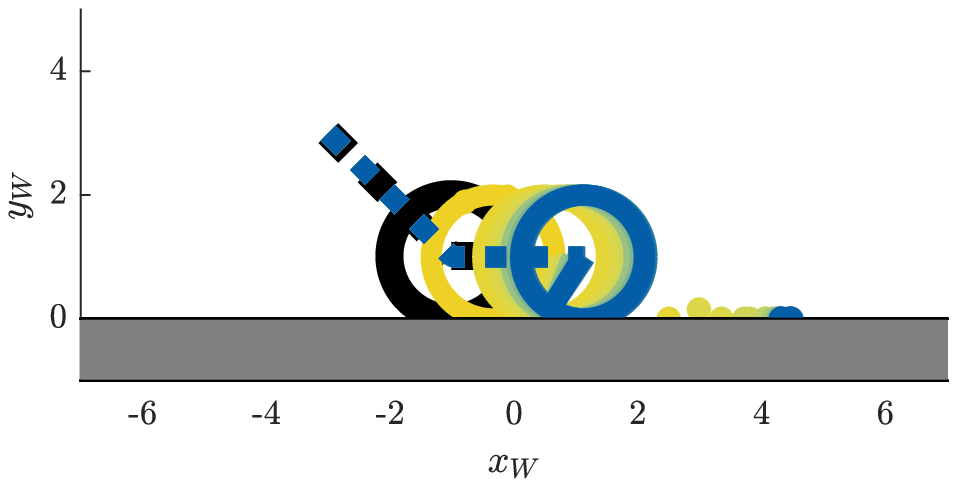}}\\
\hspace*{-11mm}
\subfloat{\includegraphics[width=14cm]{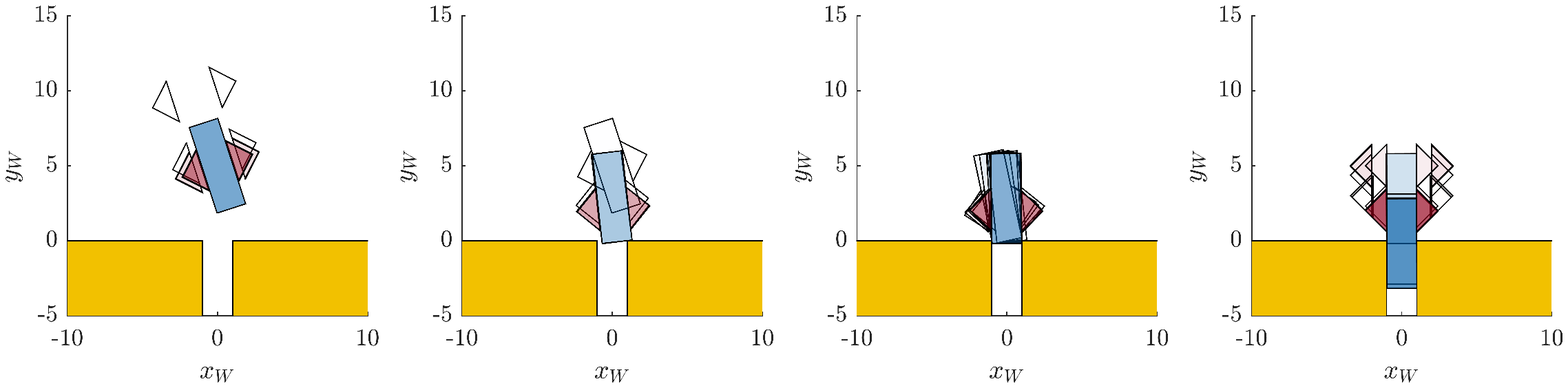}}
\vspace*{-3mm}
\caption{\textbf{Top}: motion on the same system displayed in Figure \ref{fig:TripleConvergence}, except with the addition of a wall located at $y_W = 0$, is shown in a similar manner. As the commanded motion involves squeezing the object against the wall, the perfect velocity tracking program (black) terminates at the first time-step at which the squeezing occurs. \textbf{Bottom}: a single trajectory for the polygonal peg-in-hole system segmented into four phases. \textbf{Far left}: grasping. \textbf{Center left}: jamming against the right side of the slot. \textbf{Center right}: twisting into the slot. \textbf{Far right}: final insertion.}
\label{fig:WallSlide}
\end{figure}

However, the low-gain performance, particularly on the semicircular command, showcases an important nuance in the convergent behavior described in Theorem \ref{thm:VConv}. While the method may converge over a single time-step, differences in individual time-steps may accumulate such that there is a change in the contact mode during the motion. In these low-gain cases, the fingers tend to slide off of the sides of the object, leaving the object behind its intended goal.

We now show our formulation's ability to simulate through jamming motions that perfect velocity control models cannot capture. We attempt to push the object into and roll along a wall. We use identical simulation parameters, and plot the results in Figure \ref{fig:WallSlide}. We can see that the perfect velocity control simulation terminates at the collision with the wall, while the finite gain formulation not only captures the collision, but also still exhibits convergent behavior thereafter.

\subsection{Polygonal Peg-in-hole}
We apply our method to a peg-in-hole problem, a more complex task requiring initiation and release of several contacts and gripping motions. We simulate a new system consisting of a thin rectangular peg, two triangular manipulator fingers, and a slot into which the peg is fit. The slot has a $1\%$ tolerance on the width of the peg, the relative feedback gains are set to $\FB = \Eye_6$ (in $\textrm{m s}^{-1}\textrm{ N}^{-1}$ units for linear terms and $\textrm{m}^{-1} \textrm{ s}^{-1}\textrm{ N}^{-1}$ for rotational terms), and the feedback scaling is set to $\FBScale = 0.01$. In a hand-designed trajectory, the peg catches the corner of the slot in the initial insertion attempt, after which the manipulator reorients and successfully inserts it. For each time step, only bodies close enough to make contact are considered, allowing the LCPs to be kept small. We set our time step to $50\,\text{ms}$, for which PATH is able to compute solutions in  $1.51\,\text{ms}$ on average. The trajectory is  displayed in Figure \ref{fig:WallSlide}.
\begin{comment}

\begin{figure}[h]
\vspace*{-5mm}
\hspace*{-11mm}
\includegraphics[width=14cm]{pih.eps}
\centering
\vspace*{-6 mm}
\caption{A single trajectory for the polygonal peg-in-hole system is segmented into four phases  and visualized. Far left: grasping. Center left: jamming against the right side of the slot. Center right: twisting into the slot. Far Right: final insertion.}
\label{fig:PegInHole}
\end{figure}
\vspace*{-5 mm}
\end{comment}
\subsection{Comparison to Full Dynamics}
	We now evaluate the similarity of our model to Drake \cite{ref:Tedrake2016}, a framework for simulation of rigid-body dynamics with contact. We simulate a square object of mass $.01\,\textrm{kg}$ and side length $0.4\,\textrm{m}$ in contact with four manipulators, each consisting of two $1\,\textrm{m}$ long links with a circular contact at the end of the second link. The manipulators pinch the object, which is particularly numerically challenging to simulate. While the velocity commands are symmetric, each manipulator is driven with different feedback gains, causing the object to move in the $+y_W$ direction and spin in response to the pinch. The dynamic and quasi-static simulations exhibit qualitatively similar behavior, shown in Figure \ref{fig:DynComp}.
\begin{figure}[h]
\vspace*{0mm}
\hspace*{-6mm}
\includegraphics[width=12cm]{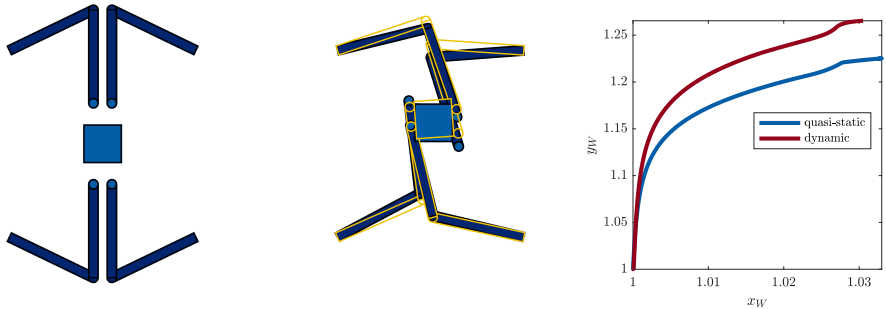}
\centering
\vspace*{-3 mm}
\caption{\textbf{Left}: the initial condition for both object pinching trajectories. \textbf{Center}: An overlay of the final conditions for the dynamic (yellow wireframe) and quasi-static (solid blue) trajectories. \textbf{Right}: A comparison of the quasi-static and dynamic trajectories of the center of mass of the object. The object moves slightly more in the $y_w$ direction in the dynamic trajectory than it does under quasi-statics.}
\label{fig:DynComp}
\end{figure}

\section{Discussion and Future Work}
We have presented a method for generating motion using a quasi-static model for planar manipulation. By explicitly modeling manipulator feedback behavior, our method is able to synthesize physically-grounded motion in the face of infeasible velocity commands. Despite this added capability, our method is as computationally efficient as previous methods. We have validated our model theoretically and empirically by proving convergence to an established model.

An interesting property of many quasi-static manipulation models is that they allow decoupling of manipulator dynamics from object motion. However, we assert that Theorem \ref{coro:nulldisp} implies that controller choice inherently effects grasping equilibrium and is therefore an unavoidable component of a physically-grounded grasping model.
We also view the decoupling of the relative feedback gains from the overall feedback intensity as essential for the accuracy of our model. As the force scaling term $k$ of (\ref{eq:FMAlignment}) is not explicitly determined, even given true manipulator gains, one cannot synthesize $\LCPMat(\FBScale\FB)$ with complete accuracy. However, in light of Theorems \ref{coro:nulldisp} and \ref{thm:VConv}, if the controller has high enough gain, a small enough choice for $\FBScale$ will produce accurate results. In these cases, $\FBScale$ is a numerical term rather than something physically meaningful; one should choose $\FBScale$ as small as possible without degrading numerical precision in the construction of $\LCPMat(\FBScale\FB)$.

One might suspect that embedding a high-gain controller into the model induces stiff behaviors, as the corresponding full dynamics are stiff. However, our quasi-static assumptions happen to eliminate this behavior. As Theorem \ref{thm:VConv} proves convergence to the perfect velocity control model in most cases and Theorem \ref{thm:bddvel} proves that velocities are bounded by $\RBDVel$ otherwise, our feedback terms does not add significant stiffness. Some numerical precision may however be lost for small $\FBScale$ due to round-off error or poor conditioning of the associated LCPs. In future work, we will conduct a quantitative analysis of this behavior.

We do not expect solutions to our LCPs to be unique, as non-uniqueness is pervasive in complementarity-based contact models \cite{ref:Brogliato1999,ref:Stewart2000}. While our model does construct a unique mapping between $\FNet[\Manip]$ and $\RBVel[\Manip]$,  there are unmet assertions required for the mapping between $\RBVel[\Obj]$ and $\RBDVel$ to be unique.  However, it does disambiguate some grazing cases (such as in Figure \ref{fig:PinchAndGraze}).

Future extension of this result to 3D motion poses significant challenges. In the 2D case, the contact between the object and the surface below generates a unique map from contact forces to object motion. In 3D motion, the manipulator must instead counteract gravity. Furthermore, quasi-static modeling cannot realistically capture certain actions; for instance, dropping the object may either result in a lack of solution, or in a $\DRBPos[\Obj]$ large enough to make linearization of $\Dist$ inaccurate.
Possible applications of this model include controller synthesis through sums-of-squares based Lyapunov analysis and model predictive control, as well as planning via trajectory optimization methods.

%\section{Conclusion}
%\input{conclusion}
\section{Acknowledgements}
This material is based upon work supported by the National Science Foundation under Grant No. CMMI-1830218. The authors thank Joury Abdeljaber for her work on numerical experiments.
\bibliographystyle{splncs03}
\bibliography{bibl}
\end{document}